\newcommand{\reals}{\ensuremath{\mathbb{R}}}
\newcommand{\E}{\ensuremath{\mathbb{E}}}
\renewcommand{\P}{\ensuremath{\mathbb{P}}}
\newcommand{\var}[1]{\textup{Var}\left(#1\right)}
\renewcommand{\epsilon}{\varepsilon}
\newcommand{\ind}{{\mathbb{I}}}
\newcommand{\regret}{\textup{regret}}
\newcommand{\X}{\ensuremath{\mathbf{X}}}
\newtheorem{theorem}{Theorem}
\newtheorem{prop}[theorem]{Proposition}
\newtheorem{lemma}[theorem]{Lemma}
\newtheorem{definition}{Definition}
\newtheorem{assumption}{Assumption}
\renewcommand{\[}{\begin{equation}}
\renewcommand{\]}{\end{equation}}
\title{Learning with Abandonment}
\author{Ramesh Johari and Sven Schmit\\Stanford University}
\date{\today}
\begin{document}
    \maketitle

    \begin{abstract}
  Consider a platform that wants to learn a personalized policy
  for each user, but the platform faces the risk of a user abandoning the platform
  if she is dissatisfied with the actions of the platform.
  For example, a platform is interested in personalizing the number of
  newsletters it sends, but faces the risk that the user unsubscribes forever.
  We propose a general thresholded learning model for scenarios like this,
  and discuss the structure of optimal policies.
  We describe salient features of optimal personalization algorithms
  and how feedback the platform receives impacts the results.
  Furthermore, we investigate how the platform can efficiently learn the heterogeneity
  across users by interacting with a population
  and provide performance guarantees.
\end{abstract}

    \section{Introduction}

Machine learning algorithms are increasingly intermediating interactions
between platforms and their users.  As a result, users' interaction with the
algorithms will impact optimal learning strategies; we investigate this
consequence in our work.  In the setting we consider, a platform wants to
personalize service to each user.  The distinctive feature in this work is that
the platform faces the risk of a user abandoning the platform if she is
dissatisfied with the actions of the platform.
Algorithms designed by the platform thus need to be careful
to avoid losing users.

There are many examples of such settings.
In the near future, smart energy meters will be able to
throttle consumers' energy consumption to increase efficiency of the power grid
during peak demand, e.g., by raising or lowering the level of air conditioning.
This can lead to cost savings for both utility companies and consumers.
However, if the utility company is too aggressive in its throttling of
energy, a user might abandon the program.
Due to heterogeneity in housing, appliances and preferences of customers,
it is important that utility companies learn personalized strategies
for each consumer.

Content creators (e.g., news sites, blogs, etc.) face a similar problem with e-mail dissemination.
There is value in sending more e-mails, but each e-mail also risks
the recipient unsubscribing, taking away any opportunity of the creator
to interact with the user in the future.
Yet another example is that of mobile app notifications.
These can be used to improve user engagement and experience.
However if the platform sends too many notifications, an upset user might turn
off notifications from the application.

In all of the above scenarios, we face a decision problem where ``more is better;''
however, there is a threshold beyond which the user abandons
and no further rewards are gained.
This work focuses on developing insight into the structure of optimal learning strategies in such settings.  We are particularly interested in understanding when such strategies take on a ``simple'' structure, as we elaborate below.

In Section~\ref{sec:model}, we introduce a benchmark model of learning with
abandonment.  In the initial model we consider, a platform interacts with a
single user over time.  The user has a {\em threshold} $\theta$ drawn from a
distribution $F$, and at each time $t = 0, 1, 2, \ldots$ the platform chooses
an action $x_t$.  If $x_t$ ever exceeds $\theta$, the user abandons; otherwise,
the user stays, and the platform earns some reward dependent on $x_t$.

We first consider the case where the distribution $F$ and the reward function are known
known (say, from prior estimation), and the challenge is finding an optimal
strategy for a given new user.  We consider the problem of maximizing expected
discounted reward.  Intuitively, we might expect that the optimal policy is
increasing and depends on the discount factor: in particular, we might try to
serve the user at increasing levels of $x_t$ as long as we see they did not
abandon.  Surprisingly, our main result shows this is not the case: that in
fact, the {\em static} policy of maximizing one-step reward is optimal for this
problem.  Essentially, because the user abandons if the threshold is ever
crossed, there is no value to trying to actively learn the threshold.

In Section~\ref{sec:learning}, we consider how to adapt our results when $F$ and/or the reward function are
unknown.  In this case, the platform can learn over multiple user arrivals.  We
relate the problem to one of learning an unknown demand curve, and suggest an
approach to efficiently learning the threshold distribution $F$ and the reward function.

Finally in Section \ref{sec:feedback}, we consider a more general model with
``soft'' abandonment: after a negative experience, users may not abandon
entirely, but continue with the platform with some probability.  We
characterize the structure of an optimal policy to maximize expected discounted
reward on a per-user basis; in particular, we find that the policy adaptively
experiments until it has sufficient confidence, and then commits to a static
action.  We empirically investigate the structure of the optimal policy as
well.

\paragraph{Related work}

The abandonment setting is quite unique, and we are aware of only one
other work that addresses the same setting.
Independently from this work, \citet{LuMSOM} model
the abandonment problem using only two actions; the safe action and the
risky action. This naturally leads to rather different results.
There are some similarities with the mechanism design literature,
though there the focus is on strategic behavior by agents
\citep{rothschild1974two, myerson1981optimal, farias2010dynamic, pavan2014dynamic, lobel2017dynamic}.
As in this work, the revenue management literature considers
agents with heuristic behaviour, but the main focus is on dealing with a
finite inventory \citep{gallego1994optimal}.

It may seem that our problem is closely related to many problems in
reinforcement learning (RL) \citep{sutton1998reinforcement} due to the dynamic
structure of our problem.  However, there are important differences.  Our focus
is on personalization; viewed through the RL lens, this corresponds to having
only a single episode to learn, which is independent of other episodes (users).
On the other hand, in RL the focus is on learning an optimal policy using
multiple episodes where information carries over between episodes.  These
differences present novel challenges in the abandonment setting, and
necessitate use of the structure present in this setting.

Also related is work on safe reinforcement learning, where catastrophic states
need to be avoided \citep{Moldovan2012SafeEI, Berkenkamp2017SafeMR}.  In such a
setting, the learner usually has access to additional information, for example
a safe region is given.  Finally, we note that in our work, unlike in safe RL,
avoiding abandonment is not a hard constraint.



    \section{Threshold model}
\label{sec:model}

In this section, we formalize the problem of finding a personalized
policy for a single user without further feedback.

\subsection{Formal setup and notation}

We consider a setting where heterogeneous users interact with a platform at
discrete time steps indexed by $t$, and focus on the problem of finding a
personalized policy for a single user.  The user is characterized by
sequence of hidden thresholds $\{\theta_t\}_{t=0}^\infty$ jointly drawn from a
known distribution that models the heterogeneity across users.  At every time
$t$, the platform selects an action $x_t \in \mathbf{X} \subset \reals_+$ from
a given closed set $\mathbf{X}$.  Based on the chosen action $x_t$, the
platform obtains the random reward $R_t(x_t) \ge 0$.  The expected reward of
action $x$ is given by $r(x) = \E(R_t(x)) < \infty$, which we assume to be
stationary and known to the platform.\footnote{Section~\ref{sec:learning}
discusses the case when both $F$ and $r$ are unknown.} While not required for
our results, we expect $r$ to be increasing.  When the action exceeds the
threshold at time $t$, the process stops.  More formally, let $T$ be the
stopping time that denotes the first time the $x_t$ exceeds the threshold $\theta_t$:
\[
    T = \min \{ t : x_t > \theta_t \}.
\]

The goal is to find a sequence of actions $\{ x_t \}_{t=0}^\infty$ that
maximizes:
\[
    \E \left[ \sum_{t=0}^{T-1} \gamma^t R_t(x_t) \right],
\]
where $\gamma \in (0,1)$ denotes the discount factor.
We note that this expectation is well defined even if $T = \infty$,
since $\gamma < 1$.  We focus here on the discounted expected reward criterion.  An alternative approach is to consider maximizing average reward on a finite horizon; considering this problem remains an interesting direction for future work.

\subsection{Optimal policies}

Without imposing further restrictions on the structure of the stochastic
threshold process, the solution is intractable.  Thus, we first consider two
extreme cases: (1) the threshold is sampled at the start and then remains fixed across time; and (2) the thresholds are independent across time.  
Thereafter, we look at the robustness of the results when we
deviate from these extreme scenarios.

\paragraph{Fixed threshold}

We first consider a case where the threshold is sampled at the beginning of the horizon, but then remains fixed.  In other words, for all $t$, $\theta_t = \theta \sim F$.  Intuitively, we might expect that the platform might try to gradually learn this threshold, by starting with $x_t$ low and increasing it as long as the user does not abandon.  In fact, we find something quite different: our main result is that the optimal policy is a constant policy.
\begin{prop}
  \label{thm:fixed}
  Suppose the function
  and the function $x \to r(x) (1-F(x))$ has a unique optimum $x^* \in \mathbf{X}$.
  Then, the optimal policy is $x_t = x^*$ for all $t$.
\end{prop}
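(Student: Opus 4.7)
The plan is to derive a per-period upper bound on the expected discounted reward of an arbitrary (possibly adaptive) policy $\pi$ that matches the value of the constant policy $x_t \equiv x^*$, and then invoke uniqueness of $x^*$ to conclude. The key observation is that, conditional on the user not having abandoned through time $t-1$, the platform has only observed a deterministic sequence of ``still alive'' signals, so the action $a_t$ that $\pi$ plays on this survival branch is a deterministic quantity (randomized policies are handled by conditioning on the policy's own randomization and then integrating). Let $b_t = \max_{s \le t} a_s$, so that $\{T > t\} = \{\theta \ge b_t\}$.

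First, I would write
\[
V(\pi) \;=\; \sum_{t=0}^\infty \gamma^t\, r(a_t)\, \P(T > t) \;=\; \sum_{t=0}^\infty \gamma^t\, r(a_t)\, (1 - F(b_t)),
\]
and, for the constant policy $\pi^*$ at $x^*$, note that $b_t = x^*$ for all $t$, so $V(\pi^*) = r(x^*)(1-F(x^*))/(1-\gamma)$.

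Second, since $F$ is nondecreasing and $b_t \ge a_t$, each summand obeys
\[
r(a_t)\,(1-F(b_t)) \;\le\; r(a_t)\,(1-F(a_t)) \;\le\; r(x^*)\,(1-F(x^*)),
\]
where the second inequality is the definition of $x^*$. Summing against $\gamma^t$ yields $V(\pi) \le V(\pi^*)$, and the assumed uniqueness of $x^*$ forces any optimizer to satisfy $a_t = x^*$ for every $t$, establishing the proposition.

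There is essentially no hard step here: the whole argument rides on the fact that playing any $a_t \neq x^*$ is doubly suboptimal, since it weakly reduces the current one-step expected return $r(a_t)(1-F(a_t))$ and, because $b_t \ge a_t$ is baked into every future survival probability, also weakly reduces all subsequent survival chances. This is precisely why ``experimenting upward'' to learn $\theta$ cannot pay off: the only way to gain information about $\theta$ is to trigger abandonment, after which no further reward is possible.
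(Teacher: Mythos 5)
Your argument is correct, and it takes a genuinely different route from the paper. The paper proves this via a dynamic program whose state is the running maximum action, transforms the value function by $J(x) = S(x)V(x)$, and computes the limit of value iteration by induction to show the optimal action from any state $x \le x^*$ is to jump to $x^*$; that approach also delivers the value function at every state. You instead give a direct, term-by-term domination bound: writing the value of any policy as $\sum_t \gamma^t r(a_t)(1-F(b_t))$ with $b_t = \max_{s\le t} a_s \ge a_t$, each summand is at most $r(a_t)(1-F(a_t)) \le r(x^*)(1-F(x^*))$, which is exactly the per-period value of the constant policy. This is more elementary, makes the ``experimentation cannot pay'' intuition explicit (deviating hurts both the current term and every future survival probability), and handles arbitrary non-monotone policies without the paper's unproved reduction to non-decreasing ones. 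One loose point in your write-up: on the survival branch the platform also observes the realized random rewards $R_s(x_s)$, so $a_t$ need not be deterministic even after conditioning on the policy's internal randomization. This does not break the proof --- the reward noise is independent of $\theta$, so conditioning on it as well (or applying the tower property with the pre-reward history at each $t$) restores the term-by-term bound $\E[\gamma^t R_t(x_t)\ind_{T>t}] \le \gamma^t r(x^*)(1-F(x^*))$ --- but you should say so explicitly. A second, shared-with-the-paper quibble: survival at $t$ is the event $\theta \ge b_t$, which equals $1-F(b_t)$ only when $F$ has no atom at $b_t$; this is immaterial under the implicit continuity assumption.
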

All proofs can be found in the supplemental material.

We sketch an argument why there exists a constant policy that is optimal.  Consider a policy
that is increasing and suppose it is optimal.\footnote{It is clear that the
optimal policy cannot be decreasing.} Then there exists a time $t$ such that
$x_t = y < x_{t+1} = z$.  Compare these two actions with the policy that would
use action $z$ at both time periods.  First suppose $\theta < y$; then the user
abandons under either alternative and so the outcome is identical.  Now
consider $\theta \geq y$; then by the optimality of the first policy, given
knowledge that $\theta \geq y$, it is optimal to play $z$.  But that means the
constant policy is at least as good as the optimal policy.

In the appendix, we provide another proof of the result using value iteration.  This proof also characterizes the optimal policy and optimal value exactly (as in the proposition).  Remarkably, the
optimal policy is independent of the discount factor $\gamma$.

\paragraph{Independent thresholds}

For completeness, we also note here the other extreme case: suppose the thresholds $\theta_t$ are drawn
independently from the same distribution $F$ at each $t$.  Then since there is no correlation
between time steps, it follows immediately that the optimal policy is a
constant policy, with a simple form.

\begin{prop}
  \label{thm:independent}
  Then the optimal policy under the independent threshold assumption is $x_t = x^*$
  for all $t$
  if
  \[
    x^* \in \arg\max_{x \in \mathbf{X}} \frac{r(x) (1-F(x))}{1-\gamma(1-F(x))}
  \]
  is the unique optimum.
\end{prop}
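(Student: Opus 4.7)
The plan is to exploit the memoryless structure induced by IID thresholds. Conditional on the user still being active at some time $t$, the distribution over future thresholds matches that at time $0$, so the problem is time-homogeneous and admits a single-state Bellman equation on the ``alive'' state.

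First I would set up this Bellman equation. If the platform plays $x$ while the user is alive, with probability $1-F(x)$ the user survives, yielding expected reward $r(x)$ and returning the problem to the same alive state for the next period, while with probability $F(x)$ the process terminates. Hence the optimal value $V^*$ satisfies $V^* = \sup_{x \in \mathbf{X}} (1-F(x))(r(x) + \gamma V^*)$. Evaluating the geometric series for a constant policy $x_t \equiv x$ gives value $g(x) := r(x)(1-F(x))/(1-\gamma(1-F(x)))$, which is precisely the objective in the proposition statement, so $V^* \geq g(x^*) = \max_x g(x)$ by admissibility of the constant policy.

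For the matching upper bound I would use the elementary algebraic identity that $(1-F(x))(r(x) + \gamma V) \leq V$ if and only if $g(x) \leq V$. Substituting $V = g(x^*)$ shows that the right-hand side of the Bellman equation is bounded above by $g(x^*)$, with equality attained at $x^*$. Thus $g(x^*)$ is a fixed point of the Bellman operator, and the supremum over actions in the Bellman equation is attained at $x^*$. Standard contraction-mapping theory for discounted infinite-horizon MDPs with bounded rewards then gives uniqueness of the Bellman fixed point and the existence of an optimal stationary policy, so $V^* = g(x^*)$ and $x_t \equiv x^*$ is optimal; uniqueness of $x^*$ as a maximizer of $g$ delivers uniqueness of the optimal stationary action.

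The main (and really minor) obstacle is justifying that the supremum over arbitrary adapted, possibly history-dependent policies coincides with the supremum over stationary Markov policies. This is a textbook fact for discounted infinite-horizon problems with bounded rewards, but must be invoked explicitly—either by citing contraction of the Bellman operator on $\ell^\infty$, or via a direct one-step coupling argument exploiting time-homogeneity of the IID threshold process.
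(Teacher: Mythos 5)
Your proposal is correct and follows essentially the same route as the paper: both arguments reduce the problem to a single ``alive'' state using the time-homogeneity of the IID thresholds, and both derive the value of a constant policy from the recursion $V(x) = (1-F(x))(r(x)+\gamma V(x))$, yielding the objective $r(x)(1-F(x))/(1-\gamma(1-F(x)))$. The only difference is one of rigor: the paper simply asserts that optimality of constant policies is ``immediate'' from the identical continuation problem, whereas you explicitly verify the Bellman fixed point and invoke the standard contraction argument to rule out history-dependent policies --- a worthwhile tightening, but not a different proof.
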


\paragraph{Robustness}

So far, we have considered two extreme threshold models and have shown
that constant policies, albeit different ones, are optimal.
In this section we look at the robustness of those results by
understanding what happens when we interpolate between the two sides by
considering an additive noise threshold model.
Here, the threshold at time $t$ consists of a fixed element and
independent noise:
$\theta_t = \theta + \epsilon_t$,
where $\theta \sim F$ is drawn once,
and the noise terms are drawn independently.
In general, the optimal policy in this model is increasing and intractable because
the posterior over $\theta$ now depends on all previous actions.
However, there exists constant policies that are close to optimal
in case the noise terms are either small or large, reflecting our preceding results in the extreme cases.

First consider the case where the noise terms are {\em small}.
In particular, suppose the error distribution has an arbitrary distribution over
a small interval $[-y, y]$.
\begin{prop}
  \label{thm:small_noise}
  Suppose $\epsilon_t \in [-y, y]$ and
  the reward function $r$ is $L$-Lipschitz.
  Then there exists a constant policy with
  value $V_c$ such that
  \[
      V^* - V_c \le \frac{2yL}{1-\gamma}
  \]
  where $V^*$ is the value of the optimal policy for the noise model,
  and $x^*$ is the optimal constant policy for the noiseless case.
\end{prop}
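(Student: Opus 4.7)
My plan is to sandwich the noisy process between two deterministic (noiseless) processes with thresholds $\theta\pm y$: because $\epsilon_t\in[-y,y]$, for any deterministic action sequence $\{x_t\}$ the stopping times satisfy $T_{-y}\le T\le T_{+y}$ pathwise, where $T_{\pm y}=\min\{t:x_t>\theta\pm y\}$. The pivot will be the noiseless optimal value $V^*_{\text{noiseless}}=r(x^*)(1-F(x^*))/(1-\gamma)$ delivered by Proposition~\ref{thm:fixed}, and I will show that both $V^*$ (the noisy optimum) and a carefully chosen constant $V_c$ sit within $yL/(1-\gamma)$ of this pivot. It suffices to reason about deterministic action sequences, because before abandonment the only signal the platform receives is the survival indicator of a test with a known noise distribution, so any policy reduces to a deterministic sequence up to~$T$.

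\textbf{Upper bound on $V^*$.}
For any such policy $\pi=\{x_t\}$, the right half of the sandwich together with $r\ge 0$ gives
\[
V(\pi)\;\le\;E\!\left[\sum_{t=0}^{T_{+y}-1}\gamma^t\,r(x_t)\right].
\]
Set $\tilde x_t=\max(x_t-y,0)$; assuming $\theta\ge 0$ one verifies $T_{+y}=\min\{t:\tilde x_t>\theta\}$, so $\tilde\pi$ is a valid policy for the original noiseless problem. By $L$-Lipschitzness, $r(x_t)\le r(\tilde x_t)+yL$, and using $\sum_t\gamma^t\le 1/(1-\gamma)$ gives $V(\pi)\le V_{\text{noiseless}}(\tilde\pi)+yL/(1-\gamma)$. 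Taking a supremum over $\pi$ and invoking Proposition~\ref{thm:fixed} yields $V^*\le V^*_{\text{noiseless}}+yL/(1-\gamma)$.

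\textbf{Lower bound via $x_c=x^*-y$, and conclusion.}
Let $V_c$ denote the value of the constant policy $x_c=x^*-y$. Whenever $\theta\ge x^*$, we have $\theta+\epsilon_t\ge\theta-y\ge x_c$ for every $t$, so the user never abandons under $x_c$. Hence
\[
V_c\;\ge\;\frac{r(x^*-y)(1-F(x^*))}{1-\gamma}\;\ge\;\frac{(r(x^*)-yL)(1-F(x^*))}{1-\gamma}\;\ge\;V^*_{\text{noiseless}}-\frac{yL}{1-\gamma},
\]
using Lipschitzness and $1-F(x^*)\le 1$ in the last step. Subtracting the two bounds delivers $V^*-V_c\le 2yL/(1-\gamma)$. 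The only step I expect to require real care is the upper bound: I must justify the reduction to deterministic action sequences and absorb the possibility $x_t-y<0$ through the truncation $\tilde x_t=\max(x_t-y,0)$; once that bookkeeping is in place, everything follows from the pathwise coupling and Lipschitz continuity of $r$.
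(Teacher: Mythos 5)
Your proof is correct and takes essentially the same route as the paper's: you lower-bound the constant policy $x^* - y$ via the pessimistic coupling $\epsilon_t \equiv -y$, and upper-bound $V^*$ by the optimum of the optimistic model $\epsilon_t \equiv y$, which is just a $y$-shift of the noiseless problem; your write-up merely makes the pathwise stopping-time coupling and the reduction to deterministic action sequences explicit where the paper leaves them implicit. (Both arguments also share the paper's unstated assumption that the shifted actions such as $x^* - y$ remain in $\mathbf{X}$.)
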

This result follows from comparing the most beneficial and detrimental scenarios;
$\epsilon_t = y$ and $\epsilon_t = -y$ for all $t$, respectively,
and nothing that in both cases the optimal policies are constant policies,
because thresholds are simply shifted.
We can then show that the optimal policy for the worst scenario achieves
the gap above compared to the optimal policy in the best case.
The details can be found in the appendix.

Similarly, when the noise level is sufficiently {\em large} with respect to the
threshold distribution $F$ there also exists a constant policy that is close to
optimal.  The intuition behind this is as follows.  First, if the noise level
is large, the platform receives only little information at each step, and thus
cannot efficiently update the posterior on $\theta$.  Furthermore, the high
variance in the thresholds also reduces the expected lifetime of any policy.
Combined, these two factors make learning ineffective.

We formalize this by comparing a constant policy to an oracle policy
that knows $\theta$ but not the noise terms $\epsilon_t$.
Let $G$ be the CDF of the noise distribution $\epsilon_t$ with
$\bar G$ denoting its complement: $\bar G(y) = 1 - G(y)$.
Then we note that for a given threshold $\theta$, the probability of survival
is $\bar G (x - \theta)$,
and thus the expected value for the constant policy $x_t = x$ for all $t$ is
\[
  \frac{\bar G(x - \theta) r(x)} {1-\gamma \bar G(x-\theta)}.
\]
Define the optimal constant policy given knowledge of the fixed part of the
threshold, $\theta$ by $x(\theta)$:
\[
  x(\theta) = \arg\max_x \frac{\bar G(x - \theta) r(x)} {1-\gamma \bar G(x-\theta)}.
\]
We can furthermore define the value of policy $x_t = x(\theta)$ when the
threshold is $\theta'$ by $v(\theta, \theta')$:
\[
  v(\theta, \theta') = \frac{\bar G(x(\theta) - \theta') r(x(\theta))} {1-\gamma \bar G(x(\theta)-\theta')}.
\]
We note that $v$ is non-decreasing in $\theta'$.
We assume that $v$ is $L_v$-Lipschitz:
\[
  |v(\theta, \theta') - v(s, \theta')| \le L_v |\theta - s|
\]
for all $\theta$ and $s$.
Note that noise distributions
$G$ that have high variance lead to a smaller Lipschitz constant.

To state our result in this case, we define an $\eta$-cover, which is a simple
notion of the spread of a distribution.
\begin{definition}
  An interval $(l, u)$ provides an $\eta$ cover for distribution $F$
  if $F(u) - F(l) > \eta$.
\end{definition}
In other words, with probability as least $1-\eta$, a random variable drawn from
distribution $F$ lies in the interval $(l, u)$.

\begin{prop}
  \label{thm:large_noise}
  Assume $r$ is bounded, and
  $\X$ is a continuous and connected space.
  Suppose $v$ defined above is $L_v$-Lipschitz,
  and there exists an $\eta$-cover for threshold distribution
  $F_\theta$ with width $w = u - l$.
  Then the constant policy $x_t = \frac{l + u}{2}$ with expected value
  $V_\theta$ satisfies
  \[
    V^* - V_\theta \le V_o - V_\theta \le \frac{L_v w}{2} + 2\frac{\eta B}{1-\gamma}.
  \]
\end{prop}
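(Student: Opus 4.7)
The plan is to establish the two inequalities in turn: the first, $V^* \le V_o$, from an information-theoretic comparison, and the second by conditioning on whether the realized shift $\theta$ lies inside the $\eta$-cover $(l,u)$.

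First, I would argue $V^* \le V_o$ as follows. Conditional on $\theta$, the threshold sequence $\{\theta + \epsilon_t\}$ is i.i.d.\ through the noise, so the problem faced by a hypothetical learner who knows $\theta$ reduces to a stationary discounted MDP whose optimum is a constant policy; by definition this optimum is $x(\theta)$ with value $v(\theta,\theta)$. Averaging over $\theta \sim F$ gives $V_o$ as the expected oracle value, which upper-bounds the expected value of any policy that must be chosen without knowledge of $\theta$, and hence $V^* \le V_o$.

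Second, I would set $\theta_0 = (l+u)/2$ and identify the value of the proposed constant policy on a user with shift $\theta$ as $v(\theta_0, \theta)$, so that, taking expectations in $\theta$,
\[
  V_o - V_\theta \;=\; \int \bigl[v(\theta,\theta) - v(\theta_0,\theta)\bigr]\, dF(\theta).
\]
I would then split this integral according to whether $\theta \in (l,u)$. On the cover, which has $F$-probability at least $1-\eta$, the Lipschitz assumption on $v$ in its first argument gives
\[
  v(\theta,\theta) - v(\theta_0,\theta) \;\le\; L_v |\theta - \theta_0| \;\le\; L_v w/2,
\]
since $\theta_0$ is the midpoint of an interval of width $w$. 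Off the cover, of probability at most $\eta$, both values lie in $[0, B/(1-\gamma)]$ where $B$ bounds $r$, so their difference is at most $2B/(1-\gamma)$, contributing at most $2\eta B/(1-\gamma)$. Summing the two pieces yields the stated bound $L_v w/2 + 2\eta B/(1-\gamma)$.

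The main obstacle I anticipate is the alignment step in the second part: the policy in the proposition is literally ``play the constant action $(l+u)/2$,'' whereas $v(\theta_0,\theta)$ is defined via the \emph{optimal} constant action $x(\theta_0)$ for a hypothetical threshold $\theta_0$. Writing the former's value as $v(\theta_0,\theta)$ requires that the midpoint is in fact $x(\theta_0)$, which is where the continuity and connectedness of $\X$ and the implicit regularity of $r$ and $G$ enter. Once this identification is in hand, the Lipschitz bound inside the cover and the boundedness bound outside the cover are both routine; isolating the right functional framework so that $v$'s Lipschitz property applies directly is the delicate point of the argument.
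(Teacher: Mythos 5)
Your proof is correct and follows essentially the same route as the paper's: compare against the oracle value $V_o = \E[v(\theta,\theta)]$, then split the expectation over the $\eta$-cover (where the Lipschitz property of $v$ in its first argument, applied at the midpoint, gives the $L_v w/2$ term) and its complement (where boundedness of $v$ by $B/(1-\gamma)$ gives the $2\eta B/(1-\gamma)$ term). The alignment issue you flag --- that the stated policy plays the action $(l+u)/2$ while $v(\bar\theta,\cdot)$ is by definition the value of playing $x(\bar\theta)$ --- is a genuine imprecision, but the paper's own proof glosses over it in exactly the same way by simply taking $V_\theta = \E[v(\bar\theta,\theta)]$.
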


The shape of $v$, and in particular its Lipschitz constant $L_v$ depend
on the threshold distribution $F$ and reward function $r$.  As the noise distribution $G$ ``widens'', $L_v$ decreases.
As a result, the bound above is most relevant when the variance of $G$ is substantial relative to spread of $F$.

To summarize, our results show that in the extreme cases where the thresholds are drawn independently, or drawn once,
there exists a constant policy that is optimal.  Further, the class of constant policies is robust when
the joint distribution over the thresholds is close to either of these scenarios.

    \section{Learning thresholds}
\label{sec:learning}

Thus far, we have assumed that the heterogeneity across the population and the
mean reward function are known to the platform, and we have focused on
personalization for a single user.  It is natural to ask what the platform
should do when it lacks such knowledge, and in this section we show how the
platform can learn an optimal policy efficiently across the population.
We study this problem within the context of the fixed threshold model described above, as it naturally lends itself to development of algorithms that learn about population-level heterogeneity.
In particular, we give theoretical performance guarantees on a UCB type \citep{auer2002finite}
algorithm, and show that a variant based on MOSS \citep{MOSS} performs better in practice.
We also empirically show that an explore-exploit strategy performs well.

\paragraph{Learning setting}

We focus our attention on the fixed threshold model,
and consider a setting where $n$ users arrive sequentially,
each with a fixed threshold $\theta_u$ ($u = 1, \ldots, n$)
drawn from unknown distribution $F$ with support on $[0, 1]$.
To emphasize the role of learning from users over time, we consider a stylized setting where the platform interacts with one user at a time, deciding on all
the actions and observing the outcomes for this user, before
the next user arrives.
Inspired by our preceding analysis, we consider a proposed algorithm that uses a constant policy
for each user.
Furthermore, we assume that the rewards $R_t(x)$ are
bounded between $0$ and $1$, but otherwise drawn from an arbitrary distribution
that depends on $x$.

\paragraph{Regret with respect to oracle}

We measure the performance of learning algorithms against the oracle
that has full knowledge about the threshold distribution $F$ and the
reward function $r$, but no access to realizations of random variables.
As discussed in Section~\ref{sec:model}, the optimal policy for the oracle
is thus to play constant policy $x^* = \max_{x \in [0, 1]} r(x) (1 - F(x))$.
We define regret as
\begin{multline}
  \regret_n(A) = nr(x^*)(1-F(x^*)) \\
  - (1-\gamma) \sum_{u=1}^{n} \E \left[ \sum_{t=0}^{T_u - 1} \gamma^t r(x_{u, t}) \right]
\end{multline}
which we note is normalized on a per-user basis with respect to the discount factor $\gamma$.

\subsection{UCB strategy}

We propose a UCB algorithm \citep{auer2002finite} on a suitably discretized
space, and prove an upper bound on its regret in terms of the number of users.
This approach is based on earlier work by \citep{Kleinberg2003TheVO}[Section 3]
for learning demand curves.  Before presenting the details, we introduce the
UCB algorithm for the standard multi-armed bandit problem.

In the standard setting, there are $K$ arms, each with its own mean $\mu_i$.
At each time $t$, UCB($\alpha$) selects the arm with largest index $B_{i, t}$
\[
    B_{i, t} = \bar X_{i, n_i(t)} + \sigma \sqrt{\frac{2\alpha \log t}{n_i(t)}}
\]
where $n_i(t)$ is the number of pulls of arm $i$ at time $t$.
We assume $B_{i, t} = \infty$ if $n_i(t) = 0$.
The following lemma bounds the regret of the UCB index policy.
\begin{lemma}[Theorem 2.1 \citep{bubeck2012regret}]
    \label{thm:ucb}
    Suppose rewards for each arm $i$ are independent across multiple pulls,
    $\sigma$-sub-Gaussian and have mean $\mu_i$.
    Define $\Delta_i = \max_j \mu_j - \mu_i$.
    Then, UCB($\alpha$) attains regret bound
    \[
        \regret_n(UCB) \le
        \sum_{i : \Delta_i > 0}
            \frac{8 \alpha \sigma^2}{\Delta_i} \log n + \frac{\alpha}{\alpha - 2}.
    \]
\end{lemma}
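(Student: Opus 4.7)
}

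The plan is to follow the standard UCB regret analysis, which decomposes the regret in terms of the expected number of pulls of each suboptimal arm. First, write
\[
    \regret_n(\textup{UCB}) = \sum_{i : \Delta_i > 0} \Delta_i \, \E[n_i(n)],
\]
so it suffices to show that for each suboptimal arm $i$,
\[
    \E[n_i(n)] \le \frac{8\alpha\sigma^2}{\Delta_i^2}\log n + \frac{\alpha}{\alpha-2}\cdot\frac{1}{\Delta_i}.
\]
Let $i^*$ denote an optimal arm with $\mu_{i^*} = \max_j \mu_j$.

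The core idea is the classical three-event decomposition. Fix a threshold $u_i = \lceil 8\alpha\sigma^2\log n / \Delta_i^2\rceil$. For each $t$, if UCB pulls the suboptimal arm $i$ and $n_i(t) \ge u_i$, then by the selection rule $B_{i,t} \ge B_{i^*,t}$, so at least one of the following must hold:
\[
    \bar X_{i^*, n_{i^*}(t)} + \sigma\sqrt{\tfrac{2\alpha\log t}{n_{i^*}(t)}} \le \mu_{i^*},
\]
\[
    \bar X_{i, n_i(t)} \ge \mu_i + \sigma\sqrt{\tfrac{2\alpha\log t}{n_i(t)}},
\]
or
\[
    \mu_{i^*} < \mu_i + 2\sigma\sqrt{\tfrac{2\alpha\log t}{n_i(t)}}.
\]
The choice of $u_i$ is precisely what makes the third event impossible whenever $n_i(t) \ge u_i$, so we only need to bound the probabilities of the first two events. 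I would then write
\[
    \E[n_i(n)] \le u_i + \sum_{t=1}^{n}\bigl(\P(E_1(t)) + \P(E_2(t))\bigr),
\]
where $E_1(t), E_2(t)$ are the first two events above.

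Next, bound each event by a union over the possible values of $n_{i^*}(t)$ (resp.\ $n_i(t)$) in $\{1,\ldots,t\}$, and apply the sub-Gaussian tail bound: for a $\sigma$-sub-Gaussian average of $s$ i.i.d.\ samples,
\[
    \P\Bigl(\bar X_s - \mu \ge \sigma\sqrt{\tfrac{2\alpha\log t}{s}}\Bigr) \le t^{-\alpha},
\]
and symmetrically for the lower tail. After union-bounding over $s = 1,\ldots,t$, each of $\P(E_1(t))$ and $\P(E_2(t))$ is at most $t\cdot t^{-\alpha} = t^{1-\alpha}$. Summing over $t$,
\[
    \sum_{t=1}^{\infty} 2 t^{1-\alpha} \le \frac{2}{\alpha-2}\quad\text{for }\alpha > 2,
\]
by comparison with $\int_1^\infty t^{1-\alpha}\,dt$. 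Plugging back in gives the stated bound (up to the absorbed constants that the author has written as $\alpha/(\alpha-2)$).

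The main obstacle, and the only nontrivial bookkeeping, is the union bound step: one has to be careful that the concentration inequality is applied conditionally on $n_i(t) = s$ for a deterministic $s$, not on the random stopping-time index $n_i(t)$ itself. The standard trick is to introduce i.i.d.\ copies $X_{i,1}, X_{i,2},\ldots$ of arm $i$'s reward and use $\bar X_{i,s} = \frac{1}{s}\sum_{k=1}^s X_{i,k}$, so that the sub-Gaussian tail bound applies for each fixed $s$, and then the union bound over $s$ yields the $t^{1-\alpha}$ factor. Everything else is routine algebra.
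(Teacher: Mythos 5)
The paper does not prove this lemma at all: it is imported verbatim as Theorem 2.1 of \citet{bubeck2012regret}, so there is no in-paper proof to compare against. Your argument is the standard proof of that cited theorem (regret decomposition via $\E[n_i(n)]$, the three-event decomposition with threshold $u_i=\lceil 8\alpha\sigma^2\log n/\Delta_i^2\rceil$, union bound over the pull count with the i.i.d.-copies device, and sub-Gaussian tails giving $t^{1-\alpha}$ per event), and it is essentially correct. The one slip is the claim $\sum_{t=1}^{\infty}2t^{1-\alpha}\le \tfrac{2}{\alpha-2}$, which fails because the $t=1$ term alone equals $2$; but the relevant sum starts at $t=u_i+1\ge 2$, where the integral comparison does give $\tfrac{2}{\alpha-2}$, and the $+1$ from the ceiling in $u_i$ (times $\Delta_i\le 1$ in the bounded-reward normalization of the cited theorem) is exactly what turns $\tfrac{2}{\alpha-2}$ into the stated $\tfrac{\alpha}{\alpha-2}$ --- so this is harmless bookkeeping rather than a gap.
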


\citet{Kleinberg2003TheVO} adapt the above result to the problem of demand curve learning.
We follow their approach:
Discretize the action space and then use the standard UCB approach to
find an approximately optimal action.
For each user, the algorithm selects a constant action $x_u$ and
either receives reward $R_u = 0$ if $x_u > \theta_u$ or
$R_u = \sum_{t=0}^{\infty} \gamma^t R_t(x_u)$.

We need to impose the following assumptions:
$\theta \in [0, 1]$, $0 \le R(x) \le M$ for some $M > 0$,
and the function $f(x) = r(x)D(x) = r(x) (1-F(x))$ is strongly convex
and thus has a unique maximum at $x^*$.

\begin{assumption}[Lemma 3.11 in Leighton and Kleinberg]
    \label{thm:concave}
    There exists constants $c_1$ and $c_2$ such that
    \[ c_1(x^* - x)^2 < f(x^*) - f(x) < c_2(x^* - x)^2\]
    for all $x \in [0, 1]$.
\end{assumption}

Using these assumptions, we can prove the main learning result.

\begin{theorem}
    \label{thm:ucb_upper}
    Suppose that $f$ satisfies the concavity condition above.
    Then UCB($\alpha$) on the discretized space with $K = O\left((n/\log n)^{1/4}\right)$ arms
    satisfies
    \[
        \regret_n(UCB) \le O\left(\sqrt{n \log n}\right)
    \]
    for all $\alpha > 2$.
\end{theorem}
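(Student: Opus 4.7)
The plan is to reduce the problem to a $K$-armed stochastic bandit, apply Lemma~\ref{thm:ucb}, pay a discretization bias, and balance the two error sources. Since the algorithm commits to a single constant action $x_u$ for each user, the normalized per-user payoff $(1-\gamma)R_u$ lies in $[0, M]$ with conditional mean $f(x_u) = r(x_u)(1-F(x_u))$, and is independent across users. Therefore these payoffs are $O(M)$-sub-Gaussian, and the sequence of user arrivals defines a stochastic bandit whose pseudo-regret agrees, in expectation, with $\regret_n$ as stated.

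Next I would discretize $[0, 1]$ into $K$ equally spaced arms $x_i = i/K$ and let $x^*_K$ denote the closest discretized point to $x^*$, so that $|x^* - x^*_K| \le 1/(2K)$. By Assumption~\ref{thm:concave}, $f(x^*) - f(x^*_K) \le c_2/(4K^2)$, so the ``discretization bias'' contributes at most $nc_2/(4K^2)$ to $\regret_n$. The remaining regret equals, in expectation, that of UCB$(\alpha)$ run on the $K$-armed discretized problem with gaps $\Delta_i = f(x^*_K) - f(x_i)$, to which Lemma~\ref{thm:ucb} directly applies.

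The key quantitative step is controlling $\sum_{i \ne x^*_K} 1/\Delta_i$. Writing $\Delta_i = (f(x^*) - f(x_i)) - (f(x^*) - f(x^*_K))$ and using both sides of Assumption~\ref{thm:concave}, I would show that for arms with $|i - k^*| = j$, $\Delta_i \ge c_1((j - \tfrac{1}{2})/K)^2 - c_2/(4K^2)$, which is $\Theta(j^2/K^2)$ once $j$ exceeds an absolute constant $J_0$. Summing the UCB per-arm term over these well-separated arms gives $\sum_{j \ge J_0} O(K^2 \log n / j^2) = O(K^2 \log n)$. The $O(1)$ near-optimal arms with $j < J_0$ have per-pull regret $O(1/K^2)$ and together contribute at most $n \cdot O(1/K^2)$ directly, bypassing the $1/\Delta_i$ bound.

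Combining the pieces, the total regret is $O(n/K^2) + O(K^2 \log n)$, and balancing with $K = \Theta((n/\log n)^{1/4})$ yields $\regret_n(\mathrm{UCB}) = O(\sqrt{n \log n})$. I expect the main obstacle to be the gap estimate in the previous paragraph: the quadratic lower bound from Assumption~\ref{thm:concave} degrades for arms close to $x^*_K$, and handling them rigorously requires the standard Kleinberg--Leighton trick of substituting the trivial $n\Delta_i$ bound for those arms instead of the $\log n / \Delta_i$ term from Lemma~\ref{thm:ucb}.
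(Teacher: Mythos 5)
Your proposal is correct and follows essentially the same route as the paper's proof: decompose regret into a discretization bias bounded via the upper inequality of Assumption~\ref{thm:concave}, apply the UCB gap-dependent bound with the lower inequality controlling $\sum_i 1/\Delta_i$, handle near-optimal arms by the trivial $n\Delta$ bound (the Kleinberg--Leighton trick the paper also invokes), and balance with $K = \Theta((n/\log n)^{1/4})$. The only cosmetic difference is that you justify sub-Gaussianity directly from boundedness of the per-user payoff, whereas the paper carries out an explicit variance computation; both suffice.
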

The proof consists of two parts, first we use Lemma~\ref{thm:concave} to
bound the difference between the best action and the best arm in the
discretized action space.
Then we use Theorem~\ref{thm:ucb} to show that the learning strategy
has small regret compared to the best arm.
Combined, these prove the result.

It is important to note that the algorithm requires prior knowledge of the
number of users, $n$. In practice it is reasonable to assume that a platform
is able to estimate this accurately, but otherwise the well-known doubling trick
can be employed at a slight cost.

\subsection{Lower bound}

We now briefly discuss lower bounds on learning algorithms.
If we restrict ourselves to algorithms that play a constant policy for each
user, the lower bound in \cite{Kleinberg2003TheVO} applies immediately.

\begin{prop}[Theorem 3.9 in \cite{Kleinberg2003TheVO}]
  Any learning algorithm A that plays a constant policy for each user,
  has regret at least
  \[
    \regret_n(A) \ge \Omega(\sqrt{n})
  \]
  for some threshold distribution.
\end{prop}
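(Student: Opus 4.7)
The plan is to reduce this lower bound directly to the demand curve learning lower bound of Kleinberg and Leighton. Restrict attention to the specific family of abandonment instances where the reward function is linear and deterministic, $r(x)=x$ with $R_t(x)\equiv x$, and the threshold distribution $F$ is drawn from the family used in Theorem~3.9 of \cite{Kleinberg2003TheVO}. This family is a legitimate subclass of our model, so any lower bound established here implies the stated bound over the larger class.

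Next I would verify that, on this family and under the restriction to constant policies, the abandonment problem is informationally and rewardwise equivalent to the dynamic pricing / demand curve problem. For a constant action $x_u$ applied to user $u$, the user survives forever with probability $1-F(x_u)$ and abandons immediately otherwise, so the discounted reward is $x_u/(1-\gamma)$ with probability $1-F(x_u)$ and $0$ otherwise. After the $(1-\gamma)$ normalization in the definition of $\regret_n$, the per-user expected contribution is exactly $x_u(1-F(x_u))$, which matches the expected revenue at price $x_u$ against demand curve $D(x)=1-F(x)$. Moreover, because $R_t(x)\equiv x$ is deterministic and known, the only information the algorithm extracts from user $u$ is whether $x_u>\theta_u$, which is precisely the binary purchase/no-purchase signal in the Kleinberg--Leighton setting. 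Thus every algorithm in our class induces a valid algorithm for their problem with identical expected regret.

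The final step is the contrapositive: if some abandonment algorithm $A$ playing constant policies achieved $\regret_n(A)=o(\sqrt{n})$ uniformly over the chosen family of threshold distributions, then the induced pricing algorithm would contradict Theorem~3.9 of \cite{Kleinberg2003TheVO}, which guarantees $\Omega(\sqrt{n})$ regret for at least one demand curve in that family. Hence there exists a threshold distribution on which $\regret_n(A)=\Omega(\sqrt{n})$.

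The only subtle point, and the place where one must be careful, is ensuring that the reduction does not leak extra information: in general our model allows stochastic rewards and continuing observations until abandonment, which could in principle help learn more per user than a single binary signal. Pinning $r$ and $R_t$ to be deterministic and known removes this possibility, so the reduction is tight; the lower bound is established on this restricted sub-family and therefore, by monotonicity of the minimax regret in the size of the instance class, on the full problem class for constant-policy algorithms.
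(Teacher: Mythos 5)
Your reduction is correct and is exactly the argument the paper relies on: the paper offers no explicit proof, simply asserting that the Kleinberg--Leighton lower bound ``applies immediately,'' and your proposal spells out the underlying equivalence (constant policy with deterministic $r(x)=x$ yields normalized per-user reward $x_u(1-F(x_u))$ and only a binary abandonment signal, matching posted-price revenue against demand curve $1-F$). Your added care about information leakage from stochastic rewards is a worthwhile detail the paper leaves implicit, but the approach is the same.
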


Thus, the discretized UCB strategy is near-optimal in the class
of constant policies.

However, algorithms with dynamic policies for users can obtain more
information on the user's threshold and therefore more easily estimate
the empirical distribution function.
Whether the $O(\sqrt{n})$ lower bound carries over to dynamic policies
is an open problem.

\subsection{Simulations}
\label{sec:simulations}

In this section, we empirically compare the performance of the discretized UCB
against other policies.
For our simulations, we also include the MOSS
algorithm \citep{MOSS}, and an explore-exploit strategy.

\paragraph{MOSS}
\citet{MOSS} give a upper confidence bound algorithm that
has a tighter regret bound in the standard multi-armed bandit problem.
The MOSS algorithm is an index policy where the index for arm $i$ is given by
\[
  B_{i, t} =
    \bar X_{i, n_i(t)}
    + \sqrt{\left( \log \frac{t}{K n_i(t)} \right)_+ / n_i(t)}
\]
While the policy is quite similar to the UCB algorithm,
it does not suffer from an extra $\sqrt{\log n}$ term in the regret bound.
However, we cannot adapt the bound to the abandonment setting,
due to worse dependence on the number of arms.
In practice, we expect this algorithm to perform better than the UCB algorithm,
as it is a superior multi-armed bandit algorithm.

\paragraph{Explore-exploit strategy}
Next, we consider an explore-exploit strategy that first estimates an empirical distribution function,
and then uses that to optimize a constant policy.
For this algorithm, we assume that for zero reward, the learner
can observe $\theta_u$ for a particular user, which mimics a strategy
where the learner increases its action by $\epsilon$ at each time period
to learn the threshold $\theta_u$ of a particular user with arbitrary precision.
Because it directly estimates the empirical distribution function and does not
require discretization, it is better able to capture the structure of our model.

The explore-exploit strategy consists of two stages.
\begin{itemize}
  \item First, obtain $m$ samples of $\theta_u$ to
    find an empirical estimate of $F$, which we denote by $\hat F_m$
  \item For the remaining users, play constant policy
    $x_u = \arg\max r(x) (1 - \hat F_m(x))$
\end{itemize}
Note that compared to the previous algorithm,
we assume this learner has access to the reward function,
and only the threshold distribution $F$ is unknown.
If the signal-to-noise ratio in the stochastic rewards is large,
this is not unrealistic: the platform, while exploring,
is able to observe a large number of rewards and should therefore
be able to estimate the reward function reasonably well.

\paragraph{Setup}
For simplicity, our simulations focus on
a stylized setting; we observed similar results under
different scenarios.\footnote{Code to replicate the simulations under
  a variety of scenarios is available at \url{https://github.com/schmit/learning-abandonment}.}
We assume that the rewards are deterministic and
follow the identity function $r(x) = x$, and
the threshold distribution (unknown to the learning algorithm)
is uniform on $[0, 1]$.
For each algorithm, we run 50 repetitions for $n = 2000$ time steps,
and plot all cumulative regret paths.
For the discretized policies, we set $K \approx 2.5 \left(\frac{n}{\log n}\right)^{1/4} = 12$.
The explore-exploit strategy first observes $20 + 2\sqrt{n} = 110$ samples
to estimate $F$, before committing to a fixed strategy.

\paragraph{Results}

The cumulative regret paths are shown in Figure~\ref{fig:regret}.
We observe that MOSS, while having higher variance,
indeed performs better than the standard UCB algorithm,
despite the lack of a theoretical bound.

However, the explore-exploit strategy obtains the lowest regret.
First, since it is aware of the reward function, it has less uncertainty.
More importantly, the algorithm leverages the structure of the problem because
it does not discretize the action space and then treat actions independently.
Finally, we note that when rewards are stochastic, the UCB and MOSS are even worse
compared to explore-exploit, as they have to estimate the mean reward function,
while the explore-exploit strategy assumes it is given.

\begin{figure}
  \centering
  \includegraphics[width=\textwidth]{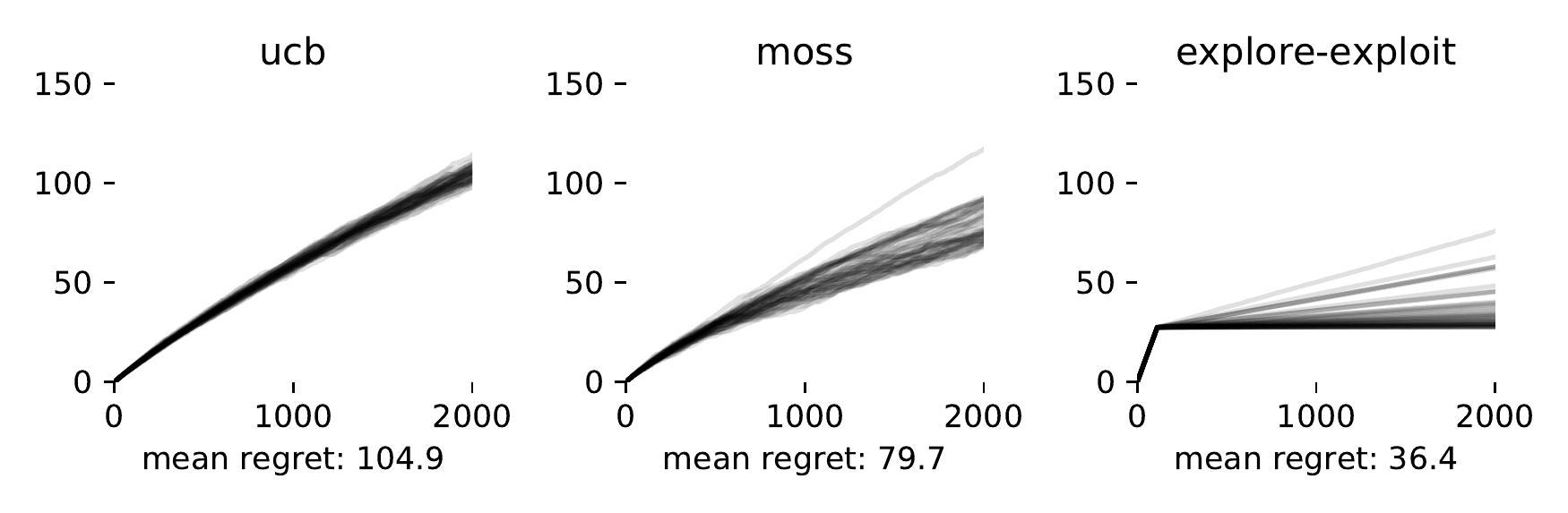}
  \caption{
    Cumulative regret plots for $r(x) = x$ and $F = U[0, 1]$.
  }
  \label{fig:regret}
\end{figure}

    \section{Feedback}
\label{sec:feedback}

In this section, we consider a ``softer'' version of abandonment, where the platform receives some feedback before the user abandons.
As example, consider optimizing the number of push notifications.
When a user receives a notification, she may decide to open the app,
or decide to turn off notifications.
However, her most likely action is to ignore the notification.
The platform can interpret this as a signal of dissatisfaction, and work to improve the policy.

In this section, we augment our model to capture such effects.
While the solution to this updated model is intractable,
we discuss interesting structure that the optimal policy exhibits:
{\em partial learning}, and the {\em aggressiveness} of the optimal policy.

\paragraph{Feedback model}
To incorporate user feedback, we expand the model as follows.  Suppose that whenever the current action $x_t$
exceeds the threshold (i.e., $x_t > \theta_t$),
then with probability $p$ we receive no reward
but the user remains, and with probability $1-p$ the user abandons.  Further, we assume that the platform at time $t$ both observes the reward $R(x_t)$,
if rewarded,
and an indicator $Z_t = \ind_{x_t > \theta_t}$.
This is equivalent to assuming that a user has geometrically distributed
\emph{patience}; the number of times she allows the platform to cross her threshold.

As before the goal is to maximize expected discounted reward.
Note that because the platform does not receive a reward when the
threshold is crossed, the problem is nontrivial even when $p=1$.
We restrict our attention to the single threshold model, where
$\theta$ is drawn once and then fixed for all time periods.

Figure~\ref{fig:tree} shows the numerically computed optimal policy
when the threshold distribution is uniform on $[0,1]$,
the reward function is $r(x) = x$, the probability of abandonment $p=0.5$
and $\gamma = 0.9$.
Depending on whether or not a feedback signal is received, the
optimal policy follows the green or the red line as we step through
time from left to right.

We note that one can think of the optimal policy as a form of bisection, though
it does not explore the entire domain of $F$. In particular it is conservative
regarding users with large $\theta$.  For example, consider a user with
threshold $0.9$.  While the policy is initially increasing and thus partially
personalizes to her threshold, $x_t$ does not converge to $0.9$, and in fact
never comes close.  We call this \emph{partial learning}; in the next section, we demonstrate that this is a key feature of the optimal policy in general.

\begin{figure}
  \centering
  \includegraphics[width=0.8\textwidth]{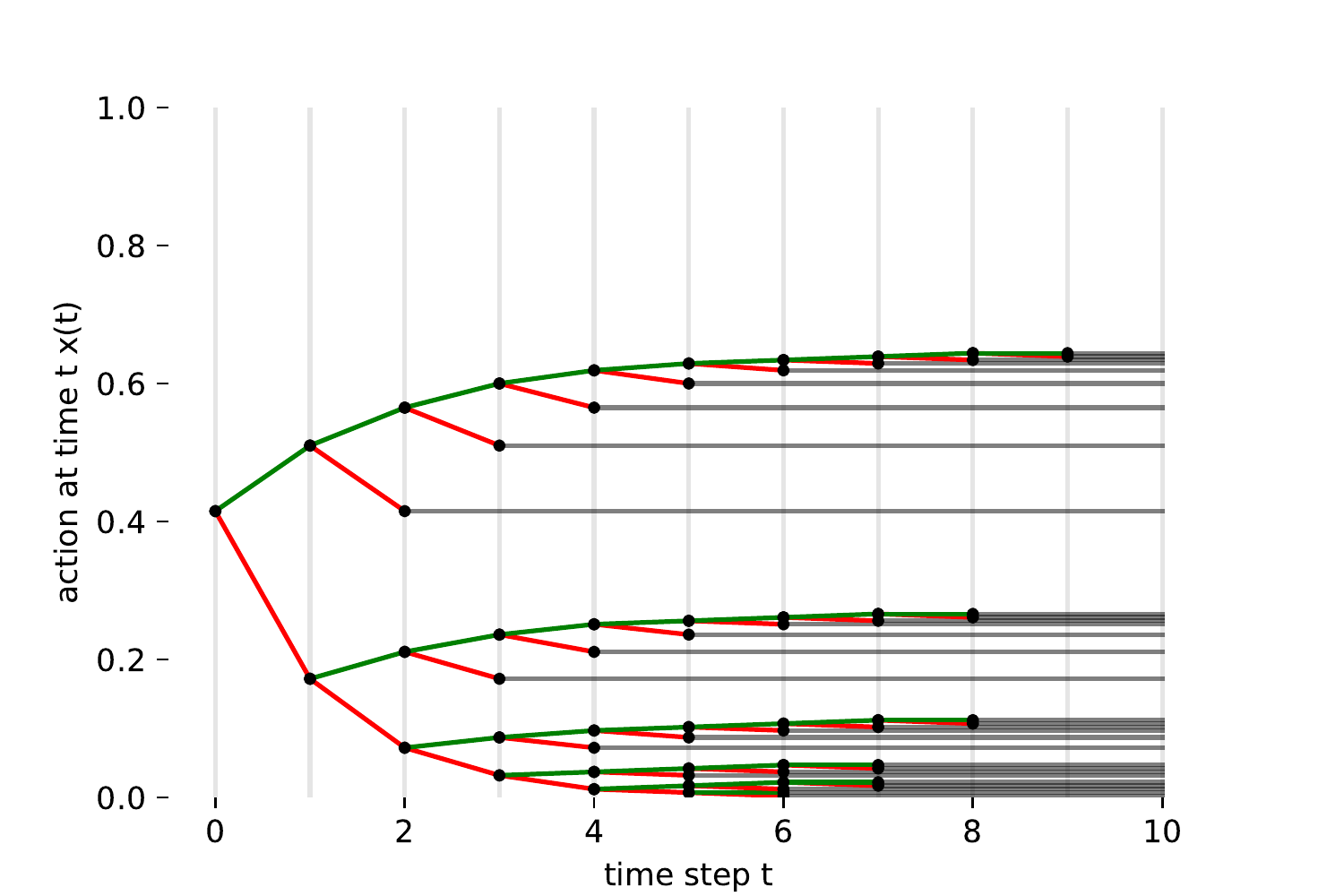}
  \caption{
Visualization of optimal policy when discount factor $\gamma = 0.9$
in the $p=0.5$ model.
Follow the tree from left to right, where if $Z_t = 0$ (reward obtained)
the next action follows from following the green line, and
if $Z_t = 1$, the optimal action is given by the point following the red line
if the user has not abandoned.
  }
  \label{fig:tree}
\end{figure}

\paragraph{Partial learning}

Partial learning refers to the fact that the optimal policy does not fully
reduce its uncertainty (the posterior) on $\theta$.
Initially, the policy learns about the threshold using a bisection-type search.
However, at some point (dependent on the user's threshold), further learning
is too risky and the optimal policy switches to a constant policy.
We note that this happens even when there is no risk of abandonment at all ($p=1$),
because at some point even the risk of losing a reward is not offset by potential
gains in getting a more accurate posterior on $\theta$.
Partial learning occurs under some regularity conditions on the
threshold distribution that ensures the posterior does not collapse,
and is Lipschitz as defined in the following paragraph.

Write $F_l^u$ for the posterior distribution over $\theta$ given
lower bound $l$ and upper bound $u$ based on previous actions
\[
  F_l^u(y) = \P(l + y < \theta \mid l < \theta < u) = \frac{F(u) - F(l+y)}{F(u) - F(l)}.
\]
We say the that the posterior distribution is non-degenerate if the
following condition holds:
\begin{definition}[Non-degenerate posterior distribution ]
  For all $\lambda > 0$, there exists a $\nu$ such that for all $l, u$ where $u - l <
  \nu$, $F_l^u(\epsilon) < 1 - \lambda \epsilon $ for $0 < \epsilon < \nu$.
\end{definition}
Thus, for sufficiently small intervals, the conditional probability decreases
rapidly as we move away from the lower bound of the interval.
Suppose $F$ is such that the posterior is non-degenerate and
is Lipschitz in the following sense.

\begin{assumption}[Lipschitz continuity of conditional distribution]
  There exists an $L' > 0$ such that
  for all intervals $[l, u]$ and all $0 < y < u - l$, we have
  \[
    p(y \mid l + \epsilon, u) - p(y \mid l, u) \le \epsilon L'.
  \]
\end{assumption}

We can use this assumption to show that the value function corresponding to the dynamic program
that models the feedback model is Lipschitz.

\begin{lemma}[Lipschitz continuity of value function]
  \label{thm:lipschitz_value}
  Consider a bounded action space $\mathbf{X}$.
  If $p$ is Lipschitz with Lipschitz constant $L_p$,
  and the reward function $r$ is bounded by $B$,
  there exists constant $L_V$ such that for all $l < u$
  \[
    V(l + \epsilon, u) - V(l, u) \le \epsilon L_V.
  \]
\end{lemma}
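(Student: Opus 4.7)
The plan is to establish Lipschitz continuity of $V$ via value iteration on the Bellman operator for this dynamic program. Let $q(x\mid l,u) = \P(\theta \ge x \mid l < \theta < u)$. At state $(l,u)$, action $x \in [l,u]$ yields reward $r(x) + \gamma V(x,u)$ with probability $q$; with probability $(1-q)p$ the threshold is crossed, the user stays, and the continuation value is $\gamma V(l,x)$; otherwise the user abandons. Thus
\begin{equation*}
V(l,u) = \max_{x \in [l,u]} \Bigl\{ q(x\mid l,u)\bigl[r(x) + \gamma V(x,u)\bigr] + \bigl(1-q(x\mid l,u)\bigr)\, p\gamma V(l,x) \Bigr\}.
\end{equation*}
Setting $V_0 \equiv 0$ and $V_{n+1} = T V_n$, standard arguments give $V_n \to V$ in sup-norm with $\|V\|_\infty \le B/(1-\gamma)$. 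I would prove by induction that each $V_n$ is $L_n$-Lipschitz in $l$, with $L_{n+1} \le C + p\gamma L_n$ for a constant $C$ depending only on $B$, $\gamma$, $p$, and $L_p$. Since $p\gamma < 1$, the recursion converges to $L_V \le C/(1-p\gamma)$.

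For the inductive step, take $x^*$ optimal for $V_{n+1}(l+\epsilon, u)$. Since $x^* \ge l + \epsilon > l$, $x^*$ is feasible at $(l,u)$ as well, and using it as a suboptimal choice there gives
\begin{align*}
V_{n+1}(l+\epsilon, u) - V_{n+1}(l, u) &\le \bigl[q(x^*\mid l+\epsilon, u) - q(x^*\mid l, u)\bigr] \bigl(r(x^*) + \gamma V_n(x^*, u) - p\gamma V_n(l, x^*)\bigr) \\
&\quad + \bigl(1 - q(x^*\mid l+\epsilon, u)\bigr)\, p\gamma \bigl[V_n(l+\epsilon, x^*) - V_n(l, x^*)\bigr].
\end{align*}
The Lipschitz hypothesis on the conditional distribution bounds the first bracket by a constant multiple of $\epsilon L_p$; the accompanying parenthesis is bounded by $B(1 + \gamma(1+p))/(1-\gamma)$ via $r \le B$ and $\|V_n\|_\infty \le B/(1-\gamma)$; and the final bracket is at most $\epsilon L_n$ by the induction hypothesis, with prefactor at most $p\gamma < 1$. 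Collecting terms yields the recursion $L_{n+1} \le C + p\gamma L_n$.

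The main obstacle is ensuring that the constant $C$ is truly uniform in $(l, u)$. A direct expansion of $q(x\mid l, u) = (F(u)-F(x))/(F(u)-F(l))$ in $l$ generates factors of $1/(F(u)-F(l))$ that blow up for narrow intervals; the Lipschitz hypothesis is stated on the conditional distribution rather than on $F$ precisely to sidestep this and secure an $\epsilon L_p$ bound that is uniform. A secondary subtlety is that the feasible action space $[l, u]$ depends on $l$, but this is handled by always using the optimizer of the state with the larger lower endpoint, which is automatically feasible at the other state. Once the uniform recursion $L_{n+1} \le C + p\gamma L_n$ is established, passing to the limit along $V_n \to V$ yields the claimed Lipschitz constant $L_V = C/(1-p\gamma)$.
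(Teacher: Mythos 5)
Your proposal is correct and follows essentially the same route as the paper's proof: value iteration from $V_0 \equiv 0$, an induction establishing $L_{n+1} \le L_p \tfrac{B}{1-\gamma} + p\gamma L_n$ by reusing the optimizer of the state with the larger lower endpoint as a feasible suboptimal action at the other state, and passing to the limit to get $L_V \le \tfrac{L_p B}{(1-p\gamma)(1-\gamma)}$. Your decomposition of the difference is an algebraic regrouping of the paper's, and your remarks on why the Lipschitz hypothesis is placed on the conditional distribution (to avoid the $1/(F(u)-F(l))$ blow-up) correctly identify the role of that assumption.
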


Using these assumptions, we can then prove that
the optimal policy exhibits partial learning,
as stated in the following proposition.

\begin{prop}
  \label{thm:partial_learning}
  Suppose $r$ is increasing, $L_r$-Lipschitz,
  non-zero on the interior of $\mathbf{X}$ and bounded by $B$.
  Furthermore, assume $p$ is non-degenerate and Lipschitz as defined above.
  For all $u \in \mathop{Int}(\mathbf{X})$ there exists an $\epsilon(u) >0$ such that for
  all $l$ where $u - l < \epsilon(u)$, the optimal action in state
  $(l, u)$ is $l$, that is
  \[
    V(l, u) = \frac{r(l)}{1-\gamma}.
  \]
  Furthermore, $\epsilon(u)$ is non-decreasing in $u$.
\end{prop}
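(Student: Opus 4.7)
The argument is built around the Bellman optimality equation
\[
V(l,u) \;=\; \max_{y \in [0,\,u-l]}\; q(y)\bigl[\,r(l+y) + \gamma V(l+y,u)\bigr] \;+\; \bigl(1-q(y)\bigr)\,p\,\gamma\,V(l,l+y),
\]
where $q(y) = F_l^u(y)$ is the posterior probability that the threshold exceeds $l+y$. Playing $y=0$ is always feasible and returns $r(l) + \gamma V(l,u)$, so $V(l,u) \ge r(l)/(1-\gamma)$ with equality exactly when $y=0$ is the argmax. The plan is to fix $u$ in the interior of $\mathbf{X}$ and establish the reverse inequality $V(l,u) \le r(l)/(1-\gamma)$ for $u-l$ small enough, by ruling out any argmax $y^* > 0$ via contradiction.

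Assume $y^* > 0$ achieves the maximum. I will control each unknown term on the Bellman right-hand side. First, $r(l+y^*) \le r(l) + L_r y^*$ by Lipschitz continuity of $r$. Second, $V(l+y^*,u) \le V(l,u) + y^* L_V$ by Lemma~\ref{thm:lipschitz_value}. Third, $V(l,l+y^*) \le (r(l) + L_r y^*)/(1-\gamma)$: from the narrower posterior $(l,l+y^*)$ the effective action space reduces to $[l,l+y^*]$ (actions above $l+y^*$ give zero reward almost surely, actions below $l$ are dominated by $l$), and every such action yields per-period reward at most $r(l+y^*)$. Substituting these bounds into the Bellman equation, collecting the $q(y^*)\gamma V(l,u)$ term on the left and then comparing with $r(l)/(1-\gamma)$, reduces after routine algebra to the single sufficient inequality
\[
\bigl(L_r + (1-\gamma)\gamma L_V\bigr)\,y^* \;\le\; r(l)\,\bigl(1-q(y^*)\bigr)\,(1-\gamma p).
\]

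Non-degeneracy closes the argument. For any prescribed $\lambda > 0$ there is $\nu>0$ such that $1-q(y^*) \ge \lambda y^*$ whenever $u-l<\nu$ and $y^* \in (0,\nu)$. Since $r$ is increasing, $r(l) \ge r(u-\nu)$, and the latter is strictly positive for $\nu$ small because $u$ lies in the interior where $r$ is nonzero. Together with $1-\gamma p > 0$, choosing $\lambda$ strictly above $(L_r + (1-\gamma)\gamma L_V)/\bigl(r(u-\nu)(1-\gamma p)\bigr)$ makes the displayed inequality strict for every admissible $y^*$, contradicting $V(l,u) \ge r(l)/(1-\gamma)$ unless $y^*=0$. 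Setting $\epsilon(u)$ equal to the resulting $\nu$ yields the first statement. For monotonicity, note that increasing $u$ only enlarges $r(u-\nu)$, relaxes the lower bound on $\lambda$, and hence permits a larger $\nu$; should the raw choice fail to be monotone, the nondecreasing rearrangement $\tilde\epsilon(u) = \inf_{v \ge u}\epsilon(v)$ still satisfies the conclusion.

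The main obstacle is the self-referential character of the Bellman equation: both $V(l+y^*,u)$ and $V(l,l+y^*)$ appear on the right-hand side and must be estimated in terms of $V$ itself. The Lipschitz lemma dispatches the first term cleanly, but for the smaller-interval term $V(l,l+y^*)$ the paper provides no Lipschitz control in the upper bound, so one has to exploit the structural per-period bound above. Without that structural estimate the ``upside'' of exploration cannot be shown to vanish at the right rate relative to the non-degeneracy lower bound $\lambda y^*$ on failure, and the contradiction collapses.
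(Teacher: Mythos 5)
Your proof is correct and follows essentially the same route as the paper's: upper-bound the Bellman right-hand side using the Lipschitz continuity of $r$ and of $V$ (Lemma~\ref{thm:lipschitz_value}), invoke non-degeneracy to get $1-q(y)\ge\lambda y$, and conclude that no $y>0$ can be optimal once $u-l$ is small, with monotonicity of $\epsilon(u)$ from $r$ increasing. The only substantive difference is cosmetic: for the post-crossing continuation term $V(l,l+y^*)$ you derive the per-period bound $(r(l)+L_r y^*)/(1-\gamma)$, whereas the paper simply uses the monotonicity bound $V(l,l+y^*)\le V(l,u)$; both suffice, and your contradiction-on-the-argmax framing is equivalent to the paper's argument that the upper-bounding function is decreasing in $y$.
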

We prove this result by analyzing the value function of the corresponding
dynamic program.
The result shows that at some point, the potential gains from a better posterior for the threshold
are not worth the risk of abandonment.
This is especially true when $\theta$ is quite likely under the posterior.
If, to the contrary, we belief the threshold is small, there is little to lose
in experimentation.
Note however that the result also holds for $p=1$, where there are only signals
and no abandonment.
In this case the risk of a signal (and no reward for the current timestep), outweights
(all) possible future gains.
Naturally, if the probability of override is small (i.e. $p$ is small),
the condition on $\lambda$ also weakens, leading to larger intervals of constant policies.


\paragraph{Aggressive and conservative policies}

Another salient feature of the structure of optimal policies in the feedback model
is the aggressiveness of the policy.
In particular, we say a policy is \emph{aggressive} if the first action $x_0$ is larger
than the optimal constant policy $x^*$ in the
absence of feedback (corresponding to $p=0$),
and \emph{conservative} if it is smaller.
As noted before, when there is no feedback, there is no benefit to adapting
to user thresholds.
However, there is value in personalization when users give feedback.

Empirically, we find that when there is low risk of abandonment,
i.e., $p \approx 1$, then the optimal policy is aggressive.
In this case, the optimal policy can aggressively target
high-value users because other users are unlikely to abandon immediately.
Thus the policy can personalize to high-value users in later periods.

However, when the risk of abandonment is large ($p \approx 0$)
and the discount factor is sufficiently close to one,
the optimal policy is more conservative than the optimal constant
policy when $p=0$.
In this case, the high risk of abandonment forces the policy
to be careful: over a longer horizon the algorithm can extract
value even from a low value user, but it has
to be careful not to lose her in the first few periods.
This long term value of a user with low threshold makes up
for the loss in immediate reward gained from aggressively
targeting users with a high threshold.
Figure~\ref{fig:first_action} illustrates this effect.
Here, we use deterministic rewards $r(x) = x$ and the
threshold distribution is uniform $F = U[0,1]$,
but a similar effect is observed for other distributions and reward functions
as well.

\begin{figure}
  \centering
  \includegraphics[width=0.8\textwidth]{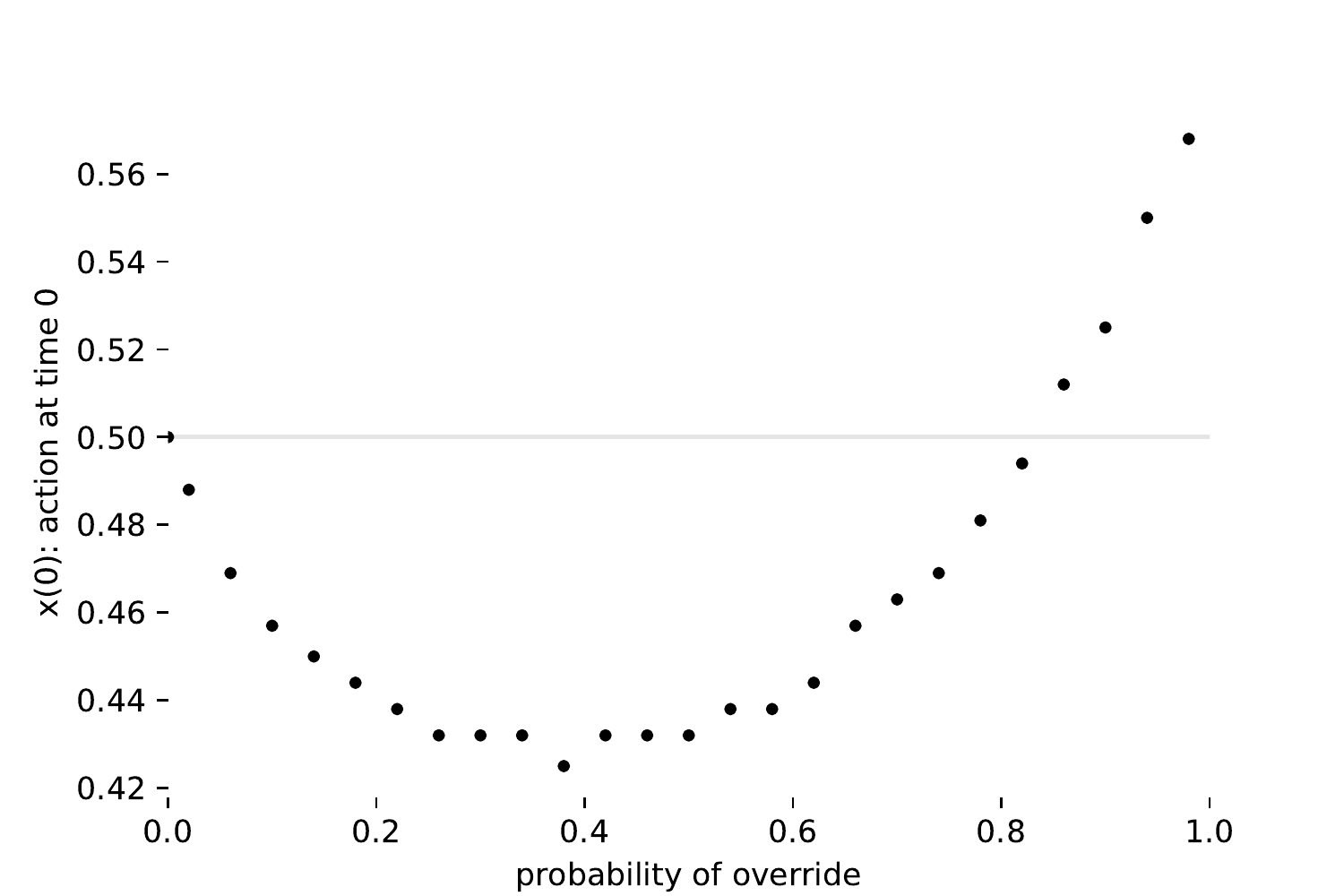}
  \caption{
    The relation between the override probability $p$ and the (approximate) optimal initial action $x_0$
when the discount factor $\gamma = 0.9$.
The artifacts in the plot are due to the discretization error from numerical computations.
  }
  \label{fig:first_action}
\end{figure}

    \section{Conclusion}

When machine learning algorithms are deployed in settings where
they interact with people, it is important to understand
how user behavior affects these algorithm.
In this work, we propose a novel model for personalization that takes into account
the risk that a dissatisfied user abandons the platform.

This leads to some unexpected results.  We show that constant policies are
optimal under fixed threshold and independent threshold models.  
We have shown that under small perturbations of these models,
constant policies are ``robust'' (i.e., perform well in the perturbed model),
though in general finding an optimal policy becomes intractable.

In a setting where a platform faces many users, but
does not know the reward function nor population distribution over
threshold, under suitable assumptions we have shown that
UCB-type algorithms perform well, both
theoretically by providing regret bounds and running simulations.
We also consider an explore-exploit strategy that is more efficient
in practice, but it requires knowledge of the reward function.

Feedback from users leads to more sophisticated optimal learning strategies
that exhibit partial learning; the optimal learning algorithm
personalizes to a certain degree to each user.
Also, we have found that the optimal policy is more conservative
when the probability of abandonment is high, and aggressive when
that probability is low.

\subsection{Further directions}

There are several interesting directions of further research
that are outside the scope of this work.

\paragraph{Abandonment models}
First, more sophisticated behaviour on user abandonment should be considered.
This could take many forms, such as a total \emph{patience budget}
that gets depleted as the threshold is crossed.
Another model is that of a user playing a learning strategy herself,
comparing this platform to one or multiple outside options.
In this scenario, the user and platform are simultaneously learning
about each other.

\paragraph{User information}
Second, we have not considered additional user information in
terms of covariates.
In the notification example, user activity seems like an important
signal of her preferences.
Models that are able to incorporate such information and are able
to infer the parameters from data are beyond the scope of this work
but an important direction of further research.

\paragraph{Empirical analysis}
This work focuses on theoretical understanding of the abandonment model,
and thus ignores important aspects of a real world system.
We believe there is a lot of potential to gain additional insight
from an empirical perspective using real-world systems with abandonment risk.

    \section{Acknowledgements}

The authors would like to thank Andrzej Skrzypacz,
Emma Brunskill, Ben van Roy, Andreas Krause, and Carlos Riquelme
for their suggestions and feedback.
This work was supported by the Stanford TomKat Center, and by the National Science Foundation under
Grant No. CNS-1544548.  Any opinions, findings, and conclusions or recommendations expressed in this
material are those of the author(s) and do not necessarily reflect the views of the National Science
Foundation.

    \bibliographystyle{plainnat}
    \bibliography{bib/bibliography}

    \appendix

\section{Proofs}
\label{sec:proofs}

\subsection{Threshold models}

\begin{proof}[Proof of Proposition~\ref{thm:fixed}]
  The proof follows from defining an appropriate dynamic program and
  solving it using value iteration.
  We will denote the state by $x$, denoting the best lower bound on $c$.
  In practice, if the process survives up to time $t$ ($T > t$)
  the state is $x = \max_{s \le t} x_s$.
  Furthermore, it is convenient to use the survival function $S(x) = 1 - F(x)$.

  It is easy to see that the optimal policy is non-decreasing,
  so we can restrict our focus to non-decreasing policies.

  The Bellman equation for the value function at state $x$ is given by
  \[
  V(x) = \max_{y \ge x} \frac{S(y)}{S(x)} (r(y) + \gamma V(y)).
  \]
  For convenience we define the following transformation $ J(x) = S(x)V(x) $
  and note that we can equivalently use $J$ to find the optimal policy.
  We now explicitly compute the limit of value iteration to find $J(x)$.
  Start with $J_0(x) = 0$ for all $x$ and note that the iteration takes the form
  \[
      J_{k+1} = \max_{y \ge x} S(y) r(y) + \gamma J_k(y) = \max_{y \ge x} p(y) + \gamma J_k(y).
  \]
  We prove the following two properties by induction for all $k > 0$:
  \begin{enumerate}
    \item $J_k(x) = p(x^*) \sum_{i=0}^{k-1} \gamma_i$ for all $x \le x^*$.
    \item $J_k(x) < J_k(x^*)$ for all $x > x^*$.
  \end{enumerate}
  The above is immediately true for $k = 1$.
  Now assume it is true for an arbitrary $k$, then
  \[
  J_{k+1}(x) = p(x^*) + \gamma J_k(x^*) \qquad \text{ for all } x \le x^*
  \]
  and
  \[
  J_{k+1}(x) < p(x^*) + \gamma J_k(x^*) = J_{k+1}(x^*) \qquad \text{ for all } x > x^*.
  \]
  The result follows from taking the limit as $k \to \infty$ and noting that for any state $x \le x^*$,
  it is optimal to jump to state $x^*$ (and stay there).
  We also immediately see that the value of the optimal policy thus is $p(x) / \gamma$, as required.
\end{proof}

\begin{proof}[Proof of Proposition~\ref{thm:independent}]
  It is immediate that the optimal policy must be constant; if the process
  survives $x_t = x$, then at time $t+1$ we face the same problem as at time
  $t$. So whatever action is optimal at time $t$, is also optimal at time $t+1$.
  Let $V(x)$ denote the value of playing $x_t = x$ for all $t$.
  Then the following relation holds
  \[
    V(x) = (1-F(x)) (r(x) + \gamma V(x))
  \]
  which leads to
  \[
    V(x) = \frac{r(x)(1 - F(x))}{1 - \gamma (1 - F(x))}.
  \]
  The result now follows immediately.
\end{proof}

\subsection{Robustness}

\begin{proof}[Proof of Proposition~\ref{thm:small_noise}]
  First we consider the constant policy $x_t = x^* - y$ for all $t$
  in the noiseless case.
  We note that
  \[
    r(x^* - y) S(x^* - y)
    \ge (r(x^*) - yL) S(x^*)
    \ge V(x^*) - yL
  \]
  where $V(x^*)$ is the value of the optimal constant policy for the noise-free model.

  Now let us consider the best possible noise model,
  then $\epsilon_t = y$ for all $t$.
  But this is equivalent to the noise-free model with the threshold shifted by $y$.
  Hence, we know that a constant policy is optimal.
  We can bound the value of this model by
  \begin{align}
    \max_x r(x) S(x-y)
    &= \max_x r(x + y) S(x) \\
    &\le \max_x ( r(x) + y L ) S(x) \\
    &= \max_x r(x) S(x) + y L S(x) \\
    &\le \max_x r(x) S(x) + y L\\
    &= V(x^*) + y L
  \end{align}
  Hence, this implies that the constant policy $x_t = x^* - y$
  is at most $\frac{2yL}{1-\gamma}$ worse than the optimal policy for the
  most optimistic noise model.
\end{proof}

\begin{proof}[Proof of Proposition~\ref{thm:large_noise}]
  Let $\bar \theta$ be the midpoint of the $\eta$ cover, $c = \frac{l+u}{2}$.
  Now we bound the expected value of an oracle policy, i.e.
  a policy that knows the true threshold $\theta^*$ as follows
  \begin{align*}
    \E(v(\theta^*, \theta^*))
    &\le \frac{2 \eta B}{1-\gamma} + \int_l^u v(\theta^*, \theta^*) dF_\theta\\
    &\le \frac{2 \eta B}{1-\gamma} + \int_l^u v(\theta^*, \theta^*) + L|\bar \theta - \theta^*|dF_\theta\\
    &\le \frac{2 \eta B}{1-\gamma} + \int_l^u v(\bar \theta, \theta^*) + L\frac{u - l}{2} dF_\theta\\
    &\le \E(v(\bar \theta, \theta^*)) + \frac{2 \eta B}{1-\gamma} + (1-\eta) \frac{Lw}{2}
  \end{align*}
  which completes the proof.
\end{proof}

\subsection{Learning}

\begin{proof}[Proof of Proposition~\ref{thm:ucb_upper}]
    Due to the discretization, the proof consists of two parts.
    First, we show that the policy that plays the best arm $i^*$
    suffers small regret with respect to the optimal policy.
    Then we use the UCB regret bound to show that
    the learning strategy has low regret with respect to the
    playing arm $i^*$.
    Thus we can decompose regret into
    \[
        \regret(UCB) = \regret_D + \regret_U
    \]
    where the first term corresponds to the discretization error and the second
    from the learning policy.
    Due to the time horizon and discounting, we write

    Let $x^*$ be the optimal strategy, i.e. it maxizimes $r(x)D(x)$.
    Then the discretization error from playing $i^*/K$, by Assumption~1
    is
    \[
        \regret_D \le \frac{c_2 n}{2K^2}  = \frac{c_2 \sqrt{n \log n} }{2}.
    \]
    Thus, the error due to the discretization is small.

    Now let us bound the UCB regret with respect to action $i^* / K$.
    As Kleinberg and Leighton (2003) note, the assumption that the pulls of different
    arms are independent is not used in the proof.
    Thus we can apply Lemma 5.
    First, we show that the arms are sub-Gaussian.
    Since the rewards are bounded by $1$ and independent across time,
    straightforward calculation shows that
    \[
      \var{(1-\gamma)\sum_{t=0}^\infty \gamma^t R_t(x)}
        = \frac{(1-\gamma)^2}{4 (1-\gamma^2)}
        \le \frac{1}{4}.
    \]
    Then using the law of total variance, conditioning on the event $x < \theta_u$,
    the variance of the total obtained reward for user $u$, $R_u$, can be bounded by
    \begin{align}
        \var{ R_u }
        &= \E(\var{ R_u \mid \theta_u }) +
        \var{ \E(R_u \mid \theta_u) }\\
        &= \frac{(1-F(x_u)) M^2}{4} +
          \left( r(x_u) \right)^2 F(x_u) (1-F(x_u))\\
        &\le M^2/2
    \end{align}
    Thus we find that the reward for users is sub-Gaussian
    with parameter $ \sigma = \frac{M^2}{2} $.

    Recall the UCB regret bound
    \[
        \regret(UCB) \le
        \sum_{i : \Delta_i > 0}
            \frac{8 \alpha \sigma^2}{\Delta_i} \log n + \frac{\alpha}{\alpha - 2}.
    \]
    We now focus on the $\sum_{i=1 : \Delta_i > 0}^K \frac{1}{\Delta_i}$ term.
    Let $\Delta_{(1)} \le \Delta_{(2)} \le \ldots \le \Delta_{(K-1)}$ denote
    the ordered gaps with respect to the optimal arm.
    Note that for $j \ge 2$, we know $\Delta_{(j)} > c_1(\frac{j}{2K})^2$
    due to Assumption~1.
    However, for the smallest gap, we only know $0 \le \Delta_{(1)} \le \frac{c_2}{K^2}$,
    depending how close $i^*/K$ is to $x^*$.
    We thus obtain
    \begin{align}
        \sum_{i=1}^K \frac{1}{\Delta_i}
        &= \sum_{i=1}^{K-1} \frac{1}{\Delta_{(i)}} \\
        &= \frac{1}{\Delta_{(1)}} + \sum_{i \ge 2} \frac{1}{\Delta_{(j)}}\\
        &\le \frac{1}{\Delta_{(1)}} + \frac{4K^2}{c_1} \sum j^{-1}\\
        &\le \frac{1}{\Delta_{(1)}} + \frac{2\pi^2}{3 c_1} K^2
    \end{align}
    Thus regret is bounded by
    \[
        \regret_U \le
        \frac{8 \alpha \sigma^2 \log n}{\Delta_{(1)}}
        + \frac{16 \alpha \sigma^2 \pi^2}{3 c_1} (K-2)^2 \log n
        + K \frac{\alpha}{\alpha - 2}
    \]
    However, the regret from due to playing the second best action
    is trivially bounded by $n \Delta_{(1)}$.
    Thus, we can bound the worst case when $\Delta_{(1)} = 4 \sqrt{ \log n / n }$.
    This leads to a bound of
    \[
        \regret_U \le
        2\alpha \sigma^2 \sqrt{n\log n}
        + \frac{16 \alpha \sigma^2 \pi^2}{3 c_1} (K-2)^2 \log n
        + K \frac{\alpha}{\alpha - 2}
    \]
    since there are $K = (n/\log n)^{1/4}$ arms, we get
    \[
        \regret_u \le
        2\alpha\sigma^2 \sqrt{n\log n}
        + \frac{16\alpha \sigma^2 \pi^2}{3 c_1} \sqrt{n \log n}
        + o(\sqrt{n \log n})
    \]
    Combining this with the bound on $\regret_D$ completes the proof.
\end{proof}

\subsection{Feedback}

\begin{proof}[Proof of Lemma~\ref{thm:lipschitz_value}]
  The Bellman equation of the dynamic program for the feedback model can be
  written as:
  \[
    V(l, u) =
    \max_{l \le y \le u}
    \frac{F(u) - F(y)}{F(u) - F(l)} (r(y) + \gamma V(y, u))
    + \frac{F(y) - F(l)}{F(u) - F(l)} \gamma V(l, y)
  \]
  where $l$ and $u$ are the lower bounds and upper bounds on $c$ based on the history.

  Note that $V$ is finite and therefore value iteration
  converges pointwise to $V$.
  We use induction on the value iterates to find the Lipschitz constant for $V$.
  Let $V_0, V_1,\ldots$ indicate the value iterates.
  Since $V_0(l, u) = 0$ for all states $(l, u)$, the Lipschitz constant for $V_0$,
  denoted by $L_0 = 0$.
  We further claim that $L_{n+1} = L_p \frac{B}{1-\gamma} + \beta \gamma L_n$.
  Suppose this is true for $n = 1,\ldots,i-1$, then for $n=i+1$ we consider
  state $(l + \epsilon, u)$ and write $x^*$ for the optimal action
  in that state, and $y^* = x^* - l$.
  Then
  \begin{multline}
    V_{i+1}(l, u) \ge p(y^* \mid l, u) (r(x^*) + \gamma V(x^*, u))
      + (1-p(y^* \mid l, u)) \beta \gamma V(l, x^*)
    \end{multline}
  Also, $V(l, x^*) \le V(l, u)$.
  Then we find
  \begin{multline}
    V_{i+1}(l + \epsilon, u) - V_{i+1}(l, u)
    \le \left[p(y^* \mid l + \epsilon, u) - p(y^* \mid l, u)\right]
      (r(x^*) + \gamma V_i(x^*, u) )\\
    + (1-p(y^* \mid l + \epsilon, u)) \beta \gamma V_i(l+\epsilon, x^*)
    - (1-p(y^* \mid l, u) \beta \gamma V_i(l, x^*)
  \end{multline}
  Using the Lipschitz continuity of $p$ we can bound
  \[
    p(y^* \mid l + \epsilon, u) - p(y^* \mid l, u) \le \epsilon L_p.
  \]
  Then note that
  \[
    r(x^*) + \gamma V(x^*, u) \le \frac{B}{1-\gamma}
  \]
  and for the final two terms we note
  \begin{multline}
    (1-p(y^* \mid l + \epsilon, u)) \beta \gamma V_i(l+\epsilon, x^*)
    - (1-p(y^* \mid l, u) \beta \gamma V_i(l, x^*)\\
    \le \beta \gamma (V_i(l+\epsilon, x^*) - V_i(l, x^*))
    \le \beta \gamma \epsilon L_i
  \end{multline}
  where we use the inductive assumption.
  Because $l, u$ and $\epsilon$ are arbitrary, we see that
  \[
    L_n \le \frac{L'B}{(1-\beta\gamma)(1-\gamma)}.
  \]
  which implies $V$ is Lipschitz.
\end{proof}

\begin{proof}[Proof of Proposition~\ref{thm:partial_learning}]
  First we note that by Lemma~8, $V$ is Lipschitz, and
  we write $L_v$ for its Lipschitz constant.
  Fix $u$, and consider a state $(u-\nu, u)$ for some $\nu > 0$.
  For notational convenience, for action $x$ we write $y = x - (u - \nu)$
  for the difference from the lower bound.
  We also use the shorthand $l = u-\nu$ and $p(y) = p(y \mid l,u)$.
  We can upperbound the value function by
  \begin{align}
    V(l, u) &=
    \max_y p(y) [r(x) + \gamma V(x, u)] + (1-p(y)) \beta\gamma V(l, x)\\
    &\le p(y)[r(l) + L_r y + \gamma V(l, u)\\
    &\quad+ \gamma L_v y] + (1-p(y)) \beta \gamma V(l, u)\\
    &\le (1-\lambda(\nu)y) [r(l) + \gamma V(l, u) + Ly]\\
    &\quad+ \lambda(\nu)y\beta\gamma V(l, u)
  \end{align}
  where we write $L = L_r + \gamma L_v$ and use the non-degeneracy of $p$.
  The derivative for the above expression with respect to $y$ is
  \begin{multline}
    (1-2\lambda(\nu))Ly + L - \lambda(\nu) r(l) - \gamma \lambda(\nu) (1-\beta) V(l, u)\\
    \le (1-2\lambda(\nu))Ly + L - \lambda(\nu) r(l).
  \end{multline}
  Since $r(l) > 0$ for all $l \in \mathop{Int}\mathbf{X}$, for $\nu$ sufficiently
  small this derivative is negative for all $y\ge 0$.
  To complete the proof, we need this upperbound to be tight at $y=0$,
  which follows immediately
  \begin{multline}
    \left. (1-\lambda(\nu)y) [r(l) + \gamma V(l, u) + Ly] + \lambda(\nu)y\beta\gamma V(l, u) \right\rvert_{y=0} = \\
      r(l) + \gamma V(l, u) \ge \frac{r(l)}{1-\gamma}.
  \end{multline}
  Since $r$ is increasing, it follows immediately that $\epsilon(u)$ is non-decreasing
  in $u$.
\end{proof}

\end{document}